\newtheorem{theorem}{Theorem}
\newtheorem{lemma}[theorem]{Lemma}
\newcommand{\norm}[1]{\left\lVert#1\right\rVert}
\icmltitlerunning{Curse of Dimensionality on Randomized Smoothing for Certifiable Robustness}
\begin{document}

\twocolumn[
\icmltitle{Curse of Dimensionality on Randomized Smoothing for Certifiable Robustness}



\icmlsetsymbol{equal}{*}

\begin{icmlauthorlist}
\icmlauthor{Aounon Kumar}{umd}
\icmlauthor{Alexander Levine}{umd}
\icmlauthor{Tom Goldstein}{umd}
\icmlauthor{Soheil Feizi}{umd}
\end{icmlauthorlist}

\icmlaffiliation{umd}{University of Maryland, College Park, Maryland, USA}

\icmlcorrespondingauthor{Aounon Kumar}{aounon@umd.edu}
\icmlcorrespondingauthor{Soheil Feizi}{sfeizi@cs.umd.edu}

\icmlkeywords{Trustworthy Machine Learning, Certifiable Robustness, Randomized Smoothing}

\vskip 0.3in
]



\printAffiliationsAndNotice{}  

\begin{abstract}

Randomized smoothing, using just a simple isotropic Gaussian distribution, has been shown to produce good robustness guarantees against $\ell_2$-norm bounded adversaries. In this work, we show that extending the smoothing technique to defend against other attack models can be challenging, especially in the high-dimensional regime.  In particular, for a vast class of i.i.d.~smoothing distributions, we prove that the largest $\ell_p$-radius that can be certified decreases as $O(1/d^{\frac{1}{2} - \frac{1}{p}})$ with dimension $d$ for $p > 2$. Notably, for $p \geq 2$, this dependence on $d$ is no better than that of the $\ell_p$-radius that can be certified using isotropic Gaussian smoothing, essentially putting a matching lower bound on the robustness radius.
When restricted to {\it generalized} Gaussian smoothing, these two bounds can be shown to be within a constant factor of each other in an asymptotic sense, establishing that Gaussian smoothing provides the best possible results, up to a constant factor, when $p \geq 2$. We present experimental results on CIFAR to validate our theory.
For other smoothing distributions, such as, a uniform distribution within an $\ell_1$ or an $\ell_\infty$-norm ball, we show upper bounds of the form $O(1 / d)$ and $O(1 / d^{1 - \frac{1}{p}})$ respectively, which have an even worse dependence on $d$. 

\end{abstract}

\section{Introduction}
Deep neural networks, especially in image classification tasks, have been shown to be vulnerable to adversarial perturbations of the input that are unnoticeable to a human observer but can alter the prediction of the model \cite{Szegedy2014}. These examples are generated by optimizing a loss function for a trained network over the input features within a small neighborhood of an example input. Gradient based methods such as FGSM \cite{GoodfellowSS14} and projected gradient descent \cite{MadryMSTV18} have been shown to be very effective for this purpose. In the last couple of years, several heuristic methods have been proposed to detect and/or defend against attacks from specific types of adversaries \cite{BuckmanRRG18, GuoRCM18, DhillonALBKKA18, LiL17, GrosseMP0M17, GongWK17}.
Such defenses, however, have been shown to break down against more powerful attacks \cite{Carlini017, athalye18a, UesatoOKO18, LaidlawF19}. For certain types of problems, adversarial examples might even be unavoidable \cite{ShafahiHSFG19}.

This necessitates developing classifiers with robustness guarantees. Several convex relaxation-based techniques have been proposed to design {\it certifiably robust} classifiers \cite{WongK18, Raghunathan2018, Singla2019, Chiang20, singla2020curvaturebased} whose predictions are guaranteed to remain constant within a certified neighborhood around the input point, thereby eliminating the presence of any adversarial example in that region. However, the ever-increasing complexity of deep neural networks has made it difficult to scale these methods meaningfully to high-dimensional datasets like ImageNet.

To deal with the scalability issue in certifiable robustness, a line of work has been introduced based on {\it randomized robustness} \cite{LecuyerAG0J19, LiCWC19, cohen19, SalmanLRZZBY19, Levine2020aistats, Levine2020patch, LevineF20, LeeYCJ19, teng2020ell, zhang2020}
wherein an arbitrary base classifier is made more robust by averaging its prediction over random perturbations of the input point within its neighborhood. \citeauthor{cohen19}~(\citeyear{cohen19}) proved the first tight robustness guarantee for Gaussian smoothing for an $\ell_2$-norm bounded adversary.

In this work, however, we show that extending the smoothing technique to defend against higher-norm attacks, especially in the high-dimensional regime, can be challenging. In particular, for a general class of i.i.d.~smoothing distributions, we show that, for $p>2$, the largest $\ell_p$-radius that can be certified (denoted by $r^*_p$ ) decreases with the number of dimensions $d$ as $O(1/d^{\frac{1}{2} - \frac{1}{p}})$. Note that the special case of $p=2$ does not suffer from such dependency on $d$. This makes smoothing-based robustness bounds weak against $\ell_p$ adversarial attacks for large $p$, especially, for $\ell_\infty$ because as $p \rightarrow \infty$ the dependence on $d$ becomes $O(1/\sqrt{d})$. Moreover, we show that the dependence of the robustness certificate on $d$ using a general i.i.d.\! smoothing distribution is similar to that of the standard Gaussian smoothing, even for $p>2$. This implies that Gaussian smoothing essentially provides the best possible robustness certificate result in terms of the dependence on $d$ even for $p>2$.  

To be more precise, suppose we smooth a classifier by randomly sampling points surrounding an image $x,$ and observing the labels assigned to these points.  Let $p_1(x)$ and $p_2(x)$ be the probabilities of the first and second most probable labels under the smoothing distribution. We prove the following bounds on the robustness certificate:
\begin{enumerate}
    \item When points are sampled by adding i.i.d.\! noise to each dimension in $x$ with $\sigma^2$ variance and continuous support, we prove the certified $\ell_p$ radius bound 
    \[ r^*_p \leq \frac{\sigma}{2 \sqrt{2} d^{\frac{1}{2} - \frac{1}{p}}} \left( \frac{1}{\sqrt{1-p_1(x)}} + \frac{1}{\sqrt{p_2(x)}} \right), \]
    whenever $p_1(x) \geq 1/2$. See Theorem~\ref{thm:gen_iid}.
    
    \item When smoothing with a generalized Gaussian distribution with variance $\sigma^2$ (which includes Laplacian, Gaussian, and uniform distributions), we prove that
    \[r^*_p \leq \frac{2 \sigma}{d^{\frac{1}{2} - \frac{1}{p}}} \left(\sqrt{\log \frac{1}{1-p_1(x)} } + \sqrt{\log \frac{1}{p_2(x)}} \right), \]
    when $e^{-d/4} < p_2(x) \leq p_1(x) < 1 - e^{-d/4}$. When $d$ is large, these bounds do not impact the range of values that $p_1(x)$ and $p_2(x)$ can take in a significant way. See Theorem~\ref{thm:gen_gauss}. 
    
    \item We also study smoothing techniques where the distribution is uniform over a region around the input point. When smoothed over an $\ell_\infty$ ball of radius $b$, i.e. uniform i.i.d between $-b$ and $b$ in each dimension, we show that
    \[r^*_p < \frac{2b}{d^{1 - \frac{1}{p}}} = 2 \sqrt{3} \sigma / d^{1 - \frac{1}{p}},\]
    where $\sigma^2 = b^2/3$ is the variance in each dimension.
    See Theorem~\ref{thm:l_inf_uniform}. Note that this bound is independent of $p_1(x)$ and $p_2(x)$.
    
    \item For smoothing uniformly over an $\ell_1$ ball of the same radius $b$, we achieve an even stronger bound: 
    \[ r^*_p < \frac{2b}{d}\] 
    See Theorem~\ref{thm:l1_uniform} for details. Along with being independent of $p_1(x)$ and $p_2(x)$, it is also independent of $p$. Thus, it holds for any $p$-norm bounded adversary. Note that, unlike the other smoothing distributions we have considered, the uniform $\ell_1$ smoothing is not i.i.d.\! in every dimension.
\end{enumerate}
These bounds hold for any $p > 0$, but are too weak to offer meaningful insights when $p < 2$ in the first two cases and for $p < 1$ in the third one. Moreover, it is straightforward to show that, for $p \geq 2$, the following $\ell_p$-radius can be certified using \citeauthor{cohen19}'s~(\citeyear{cohen19}) Gaussian smoothing:
\begin{equation}
\label{eq:gaussian_bound}
    r_p  = \frac{\sigma}{2 d^{\frac{1}{2} - \frac{1}{p}}}\left(\Phi^{-1}\left(p_1(x)\right) - \Phi^{-1}\left(p_2(x)\right)\right),
\end{equation}
which has the same dependence on $d$ as the upper bound obtained using i.i.d.\! smoothing. This radius is asymptotically only a constant factor away from the upper bound for the generalized Gaussian distribution, showing that this family of distributions fails to outperform standard Gaussian smoothing in high dimensions. To the best of our knowledge, these bounds form the first results on the limitations of randomized smoothing in the high dimensional regime that cover an extensive range of natural and commonly used smoothing distributions.\footnote{We have later come to know about a concurrent work which also illustrates the difficulty of extending randomized smoothing to defend against $\ell_\infty$
-attacks for high-dimensional data \cite{blum2020random}.}
We provide empirical evidence to support our claims on the CIFAR-10 dataset.

\section{Preliminaries and Notation}
Let $h$ be a classifier that maps inputs from $\mathbb{R}^d$ to classes in $\mathcal{C}$. Let $\mathcal{P}$ be a (smoothing) probability distribution in $\mathbb{R}^d$. We define a \emph{smoothed} classifier $\bar{h}$ as below:
\[\bar{h}(x) \triangleq \underset{c \in \mathcal{C}}{\arg\max} \underset{\Delta \sim \mathcal{P}}{\mathbb{P}}(h(x + \Delta) = c). \]
We refer to the process of smoothing using distribution $\mathcal{P}$ as $\mathcal{P}$-smoothing.
Let $p_c(x)$ be the output probability of the base classifier for the class $c$. That is, \[p_c(x) := \underset{\Delta \sim \mathcal{P}}{\mathbb{P}}(h(x + \Delta) = c). \] Without loss of generality, we assume that $p_1(x)$ and $p_2(x)$ are the probabilities of the first and second most likely classes, respectively. 

For $p>0$, we say a smoothing distribution $\mathcal{P}$ achieves a \emph{certified $\ell_p$-norm radius} of $r_p$ if, for a base classifier $h$ and an input $x$,
\[ \bar{h}(x + \delta) = \bar{h}(x), \quad \forall \delta \in \mathbb{R}^d, \norm{\delta}_p \leq r_p.\]
For instance, as derived in \cite{cohen19}, the Gaussian smoothing distribution $\mathcal{N}(0, \sigma^2 I)$ achieves a certified 2-norm radius of $\frac{\sigma}{2}(\Phi^{-1}(p_1(x)) - \Phi^{-1}(p_2(x)))$ where $\Phi^{-1}$ is the inverse of the standard Gaussian CDF.

For $p_1, p_2 \in (0, 1)$, such that, $p_1 \geq p_2$, let $r^*_p$ denote the largest $r_p$ that can be certified using $\mathcal{P}$-smoothing for all classifiers satisfying $p_1(x) = p_1$ and $p_2(x) = p_2$. If we can show a  classifier $h$ in this class and two points $x, x' \in \mathbb{R}^d$, such that, $\bar{h}(x) \neq \bar{h}(x')$, then $r^*_p \leq \norm{x'-x}_p$. We use this fact to show upper bounds on the largest $p$-norm radius that can be certified using a given class of distributions.

\section{General i.i.d.\! Smoothing}
We set the $\mathcal{P}$ to be a smoothing distribution $\mathcal{I}$ where each coordinate of $\Delta$ is sampled independently and identically from a symmetric distribution with zero mean, $\sigma^2$ variance with a continuous support. We prove the following theorem:
\begin{theorem}
\label{thm:gen_iid}
For distribution $\mathcal{I}$ and for $p_1, p_2 \in (0, 1)$, such that, $p_1 \geq 1/2$ and $p_1 + p_2 \leq 1$, the largest $\ell_p$-radius $r^*_p$ that can be certified for all classifiers satisfying $p_1(x) = p_1$ and $p_2(x) = p_2$ under $\mathcal{I}$-smoothing at input point $x$ is bounded as:
\begin{equation}
\label{eq:iid_bnd}
    r^*_p \leq \frac{\sigma}{2 \sqrt{2} d^{\frac{1}{2} - \frac{1}{p}}} \left( \frac{1}{\sqrt{1-p_1(x)}} + \frac{1}{\sqrt{p_2(x)}} \right).
\end{equation}
\end{theorem}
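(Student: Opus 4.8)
The plan is to instantiate the witness-point strategy from the preliminaries: I will exhibit one base classifier $h$ realizing $p_1(x)=p_1$ and $p_2(x)=p_2$, together with a perturbation $\delta$ for which $\bar h(x+\delta)\neq\bar h(x)$, so that $r^*_p\le\norm{\delta}_p$. By translation invariance of the noise I may take $x=0$, so that $x+\Delta=\Delta$ has i.i.d.\ symmetric coordinates. The key idea is to collapse all $d$ coordinates into a single scalar statistic $g(z):=\langle\mathbf 1,z\rangle=\sum_i z_i$, which under the noise is a real random variable with mean $0$, variance $d\sigma^2$, a symmetric law, and a density (sums of continuous i.i.d.\ variables). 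The worst-case classifier will be a pair of thresholds on $g$, and the perturbation will be the all-ones shift, whose $\ell_p$ cost is $\eta\,d^{1/p}$ while it moves the \emph{mean} of $g$ by $\eta d$; this mismatch between $d^{1/p}$ and $d$ is what produces the $d^{1/2-1/p}$ decay.

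First I would define the quantile thresholds $\tau_1,\tau_2$ by $\mathbb P(g\le\tau_1)=p_1$ and $\mathbb P(g\ge\tau_2)=p_2$, assign class $1$ to $\{z:g(z)\le\tau_1\}$, class $2$ to $\{z:g(z)\ge\tau_2\}$, and a third class to the gap. Continuity of the law of $g$ guarantees the quantiles exist; the hypothesis $p_1+p_2\le 1$ forces $\tau_1\le\tau_2$ so the two regions are disjoint; and symmetry together with $p_1\ge 1/2\ge p_2$ gives $0\le\tau_1\le\tau_2$ and makes class $1$ the arg-max at $x=0$. This realizes the prescribed $p_1(x),p_2(x)$.

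Next I would take $\delta=\eta\mathbf 1$ with $\eta>0$, so $\norm{\delta}_p=\eta\,d^{1/p}$ and under the shifted law $g$ is replaced by $g+\eta d$. I want class $2$ to overtake class $1$ at $x+\delta$, i.e.\ $\mathbb P(g+\eta d\ge\tau_2)>\mathbb P(g+\eta d\le\tau_1)$. Using the symmetry identity $\mathbb P(g\ge a)=\mathbb P(g\le -a)$ on the left-hand side rewrites this as $\mathbb P(g\le \eta d-\tau_2)>\mathbb P(g\le\tau_1-\eta d)$, which by monotonicity of the CDF holds exactly when $2\eta d>\tau_1+\tau_2$. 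Thus every $\eta>\tfrac{\tau_1+\tau_2}{2d}$ gives a valid witness, and letting $\eta$ decrease to the threshold yields
\[ r^*_p\;\le\;\frac{\tau_1+\tau_2}{2\,d^{\,1-1/p}}. \]

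Finally I would bound $\tau_1+\tau_2$ using \emph{only} the variance $d\sigma^2$, since the statement must hold for every admissible distribution. The sharp constant comes from the symmetry of $g$: for $s>0$, $\mathbb P(g\ge s)=\tfrac12\mathbb P(|g|\ge s)\le \tfrac{d\sigma^2}{2s^2}$ by Chebyshev, a factor-$2$ improvement over the one-sided (Cantelli) bound. Applying this to $p_2=\mathbb P(g\ge\tau_2)$ and to $1-p_1=\mathbb P(g\ge\tau_1)$ gives $\tau_2\le \sigma\sqrt{d}/\sqrt{2p_2}$ and $\tau_1\le\sigma\sqrt{d}/\sqrt{2(1-p_1)}$, and substituting into the displayed inequality reproduces \eqref{eq:iid_bnd}. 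I expect the main obstacle to be pinning down the constant $\tfrac{1}{2\sqrt2}$ rather than a looser one: this rests on both uses of symmetry — in reducing the flip condition to $2\eta d>\tau_1+\tau_2$ and in halving the Chebyshev tail — and on recognizing that the all-ones shift is the right coupling. The remaining care is routine but must be gotten right: existence of the quantiles, the ordering $0\le\tau_1\le\tau_2$, and verifying that class $1$ is genuinely the arg-max at $x$.
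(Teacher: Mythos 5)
Your proposal is correct and takes essentially the same route as the paper's own proof: the same two-threshold classifier on the coordinate sum $S=\sum_i Z_i$, the same all-ones perturbation whose flip condition reduces by symmetry to $2\eta d>\tau_1+\tau_2$, and the same symmetry-halved Chebyshev bound on the thresholds, yielding the identical constant $\frac{1}{2\sqrt{2}}$. Your extra care about quantile existence, the ordering $0\le\tau_1\le\tau_2$, and class one being the arg-max at $x$ simply makes explicit details the paper leaves implicit.
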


\begin{proof}
Let $Z_i$ be the random variable modelling the $i^{th}$ coordinate of $\Delta$. Define a random variable $S = \sum_{i=1}^d Z_i$. It is straightforward to show that this random variable is distributed symmetrically with zero mean, $d \sigma^2$ variance and a continuous support. The key intuition behind this proof is that the random variable $S$, which is the sum of $d$ identical and independent random variables, will tend towards a Gaussian distribution for large values of $d$, making the distribution $\mathcal{I}$ suffer from some of the same limitations as the Gaussian distribution. 

To simplify our analysis, we move our frame of reference so that $x$ is at the origin. Therefore, $r^*_p \leq \norm{x'}_p$. Consider a classifier $g$ that maps points in $\{ w \in \mathbb{R}^d \mid \sum_{i=1}^d w_i \leq s_1 \}$ to class one and those in $\{ w \in \mathbb{R}^d \mid \sum_{i=1}^d w_i \geq s_2 \}$ to class two. We pick $s_1, s_2 \in \mathbb{R}^+$ such that, $\mathbb{P}(S \leq s_1) = p_1(x)$ (this requires $p_1(x) \geq 1/2$) and $\mathbb{P}(S \geq s_2) = p_2(x)$.
Let $x'$ be the point with every coordinate equal to $\epsilon$ and so, $\sum_{i=1}^d x'_i = \epsilon d$. Since $S$ is symmetric and has a continuous support, $\bar{g}(x') = \bar{g}(x)$ only if $\sum_{i=1}^d x'_i \leq \frac{s_1 + s_2}{2}$, which implies $\epsilon \leq \frac{s_1 + s_2}{2d}$. Therefore,
\begin{equation}
\label{robust_cert_bound}
  r^*_p \leq \norm{x'}_p = \epsilon d^{1/p} \leq \frac{s_1 + s_2}{2d^{1 - \frac{1}{p}}}.
\end{equation}
Figure~\ref{fig:pdf_shift} illustrates how the probabilities of the top two classes change as we move from $x$ to $x'$.

\begin{figure}[t]
\centering
\includegraphics[scale=0.2]{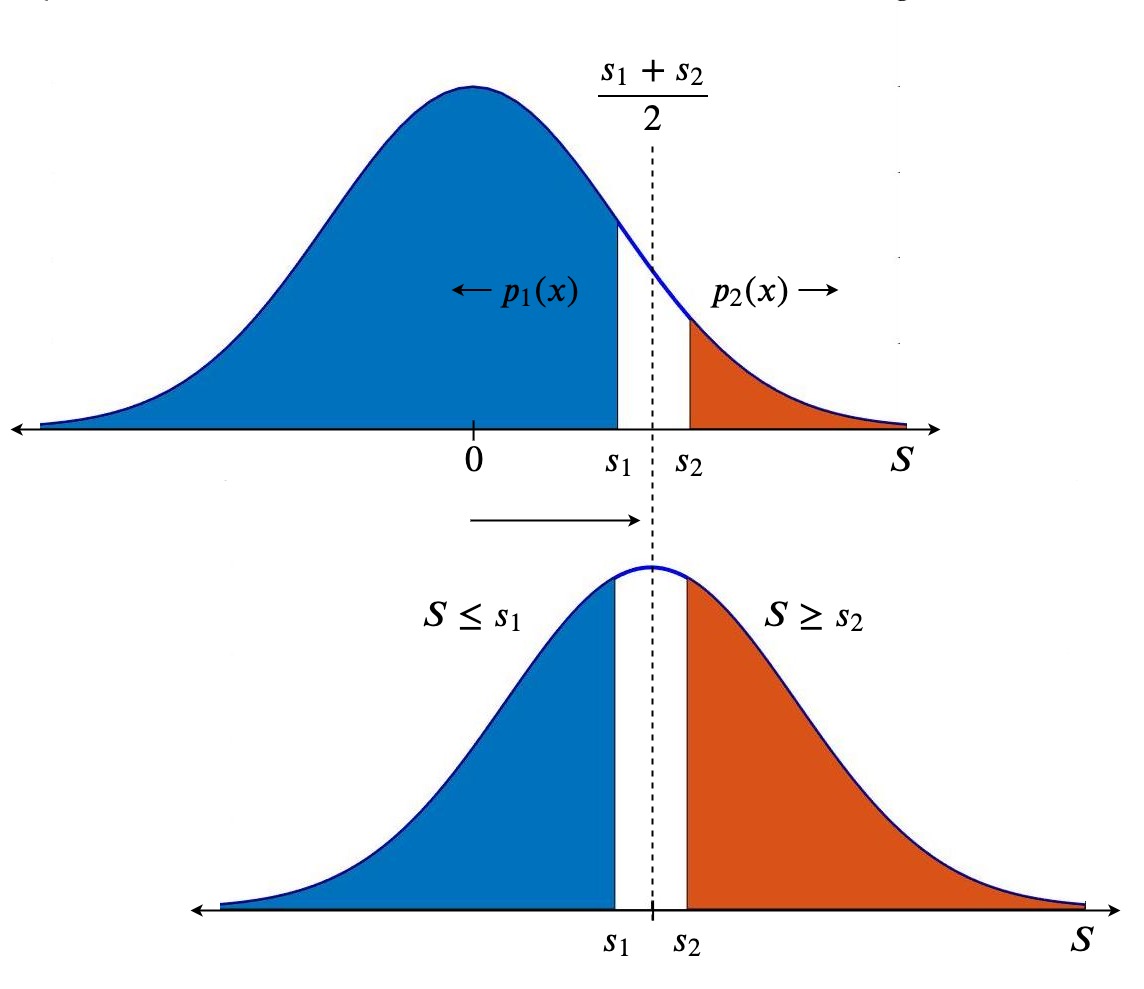}\vspace{-3mm}
\caption{As the distribution of $S$ moves from the origin to $\frac{s_1 + s_2}{2}$ the probability for class one decreases and that of class two increases. They become equal at $\frac{s_1 + s_2}{2}$ beyond which class two becomes more likely.}
\label{fig:pdf_shift}
\end{figure}

Applying Chebyshev's inequality on $S$, we have:
\[ P( S \geq s) = \frac{P(|S| \geq s)}{2} \leq \frac{d \sigma^2}{2s^2} \]
The value of $s$ for which $\frac{d \sigma^2}{2s^2} = p_2(x)$ must be an upper-bound on $s_2$.
\[s_2 \leq \frac{\sqrt{d} \sigma}{\sqrt{2p_2(x)}} \]
Similarly, since $\mathbb{P}(S \geq s_1) = 1-p_1(x)$,
\[s_1 \leq \frac{\sqrt{d} \sigma}{\sqrt{2(1-p_1(x))}} \]
Substituting the above bounds for $s_1$ and $s_2$ in~(\ref{robust_cert_bound}), proves Theorem~(\ref{thm:gen_iid}):
\[ r^*_p \leq \frac{\sigma}{2 \sqrt{2} d^{\frac{1}{2} - \frac{1}{p}}} \left( \frac{1}{\sqrt{1-p_1(x)}} + \frac{1}{\sqrt{p_2(x)}} \right). \]
\end{proof}

\section{Generalized Gaussian Smoothing}
We now restrict ourselves to the class of generalized Gaussian distributions that subsumes some commonly used and natural smoothing distributions such as Gaussian, Laplacian and uniform distributions. Using a similar approach as in the previous section, we obtain tighter upper bounds on $r^*_p$ by restricting the smoothing distribution to generalized Gaussian. In this class of distributions, each coordinate is sampled independently from the following distribution:
\[p(z) = \frac{1}{C} e^{- \left( |z|/b \right)^q} \]
where $z \in \mathbb{R}$, $b > 0$ is the \emph{scale parameter}, $q > 0$ is the \emph{shape parameter} and $C$ is the normalizing constant
\begin{align}\label{eq:C}
    C &= \int_{-\infty}^{\infty} e^{- \left( |z|/b \right)^q} dz\\
      &= 2 \int_{0}^{\infty} e^{- z^q/b^q } dz = \frac{2b \Gamma(1/q)}{q}\nonumber,
\end{align}
where $\Gamma(.)$ is the gamma function. The mean of this distribution is at zero and the variance $\sigma^2$ can be calculated as
\begin{align*}
    \sigma^2 &= \frac{1}{C} \int_{-\infty}^{\infty} z^2 e^{- \left( |z|/b \right)^q} dz\\
    &= \frac{2}{C} \int_{0}^{\infty} z^2 e^{- z^q/b^q } dz = \frac{2b^3 \Gamma(3/q)}{Cq}.
\end{align*}
Substituting $C$ from \eqref{eq:C} leads to
\[\sigma^2 = \frac{b^2 \Gamma(3/q)}{\Gamma(1/q)}.\]
Note that the class of generalised Gaussian distributions is a subset of the class of i.i.d.\! smoothing distributions considered in the previous section. The joint probability distribution over all the $d$ dimensions can be expressed as:
\[p(z_1, z_2, \ldots, z_d) = \frac{1}{C^d} e^{- \sum_{i=1}^{d} \left( |z_i|/b \right)^q},\]
which for $q = 1, 2$ represents Laplace and Gaussian distributions, respectively. As $q \rightarrow \infty$, this distribution approximates the uniform distribution over $[-b, b]^d$. For a finite $q$, the level sets of the above p.d.f.\! define sets with constant $\ell_q$-norm. Let $\mathcal{G}$ be a generalised Gaussian distribution with $q \geq 1$. The following theorem holds:
\begin{theorem}
\label{thm:gen_gauss}
For distribution $\mathcal{G}$ and for $e^{-d/4}< p_2 \le p_1 < 1 - e^{-d/4}$ and $p_1 + p_2 \leq 1$, the largest $\ell_p$-radius $r^*_p$ that can be certified for all classifiers satisfying $p_1(x) = p_1$ and $p_2(x) = p_2$ under $\mathcal{G}$-smoothing at input point $x$, is bounded as:
\begin{equation}
\label{eq:gen_gauss_bnd}
    r^*_p \leq \frac{2 \sigma}{d^{\frac{1}{2} - \frac{1}{p}}} \left(\sqrt{\log (1/(1-p_1(x)))} + \sqrt{\log (1/p_2(x))} \right)
\end{equation}
\end{theorem}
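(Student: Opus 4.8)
The plan is to reuse the geometric reduction from the proof of Theorem~\ref{thm:gen_iid} essentially verbatim and only sharpen the tail estimate on the summed noise. As before, I would translate $x$ to the origin, set $S = \sum_{i=1}^d Z_i$, and use the halfspace classifier $g$ that assigns class one to $\{w : \sum_i w_i \le s_1\}$ and class two to $\{w : \sum_i w_i \ge s_2\}$, choosing $s_1, s_2$ so that $\mathbb{P}(S \le s_1) = p_1(x)$ and $\mathbb{P}(S \ge s_2) = p_2(x)$; equivalently $\mathbb{P}(S \ge s_1) = 1 - p_1(x)$. Testing the all-$\epsilon$ point $x'$ exactly as in that proof yields
\[ r^*_p \le \norm{x'}_p = \epsilon d^{1/p} \le \frac{s_1 + s_2}{2 d^{1 - \frac{1}{p}}}, \]
so the entire task collapses to bounding $s_1$ and $s_2$. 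Chebyshev's inequality (as used before) produces the $1/\sqrt{p}$ behaviour of Theorem~\ref{thm:gen_iid}; to reach the logarithmic form I would instead exploit that, for $q \ge 1$, each coordinate has a finite moment generating function and hence $S$ admits a Chernoff bound.

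The key step is a moment-generating-function estimate. Since the coordinates are i.i.d., $\mathbb{E}[e^{\lambda S}] = \left(\mathbb{E}[e^{\lambda Z}]\right)^d$, so it suffices to control a single coordinate. I would show that for $q \ge 1$ there is an absolute constant $c$ and a threshold $\lambda_0 = \Theta(1/\sigma)$ with
\[ \mathbb{E}\!\left[e^{\lambda Z}\right] \le e^{c \sigma^2 \lambda^2}, \qquad |\lambda| \le \lambda_0, \]
i.e.\ $S$ is subgaussian with variance proxy $O(\sigma^2 d)$ on a bounded range of $\lambda$. Plugging this into $\mathbb{P}(S \ge t) \le e^{-\lambda t}\,\mathbb{E}[e^{\lambda S}]$ and minimizing over $\lambda$ (optimizer $\lambda^* = t/(2 c \sigma^2 d)$) gives $\mathbb{P}(S \ge t) \le \exp\!\left(-t^2/(4 c \sigma^2 d)\right)$, valid as long as $\lambda^* \le \lambda_0$, that is $t \lesssim \sigma^2 d\,\lambda_0 = \Theta(\sigma d)$.

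Inverting this tail at the probability levels $p_2(x)$ and $1 - p_1(x)$ then yields
\[ s_2 \le 2\sqrt{c}\,\sigma \sqrt{d \log(1/p_2(x))}, \qquad s_1 \le 2\sqrt{c}\,\sigma \sqrt{d \log(1/(1-p_1(x)))}, \]
and substituting into the displayed bound for $r^*_p$ collapses the $\sqrt{d}/d^{1-1/p}$ into $1/d^{1/2-1/p}$; one checks that $c=4$ makes the prefactor exactly $2\sigma$, giving \eqref{eq:gen_gauss_bnd}. This is where the hypothesis $e^{-d/4} < p_2(x) \le p_1(x) < 1 - e^{-d/4}$ earns its keep: it forces $\log(1/p_2(x)) < d/4$ and $\log(1/(1-p_1(x))) < d/4$, so $s_1, s_2 = O(\sigma\sqrt{d\cdot d}) = O(\sigma d)$ stay inside the range $t \lesssim \sigma d$ where the quadratic (subgaussian) regime of the Chernoff bound is valid.

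I expect the main obstacle to be the single-coordinate MGF bound, specifically pinning down $c$ and the range $\lambda_0$ uniformly over $q \ge 1$. For $q \ge 2$ the generalized Gaussian is genuinely subgaussian and the estimate holds for all $\lambda$; but for $1 \le q < 2$ (e.g.\ the Laplacian, $q=1$) it is only subexponential, with the MGF finite just on a bounded window of $\lambda$, and for $q < 1$ the MGF fails to exist near the origin altogether — which is exactly why $\mathcal{G}$ is restricted to $q \ge 1$. Consequently the cutoff $t \lesssim \sigma d$, equivalently the lower bounds $p_2(x), 1-p_1(x) > e^{-d/4}$, is not a technical convenience but an essential feature confining the argument to the regime where the sum concentrates like a Gaussian; outside it the heavier subexponential tail would degrade the bound.
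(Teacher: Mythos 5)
Your proposal is correct and follows essentially the same route as the paper: the identical halfspace/all-$\epsilon$ reduction from Theorem~\ref{thm:gen_iid} giving $r^*_p \le \frac{s_1+s_2}{2d^{1-1/p}}$, followed by a Chernoff bound whose single-coordinate MGF estimate (your subgaussian bound $E[e^{\lambda Z}] \le e^{c\sigma^2\lambda^2}$ on a window $|\lambda| \le \Theta(1/\sigma)$) is equivalent to the paper's Lemma~\ref{lem:exp_bnd}, which bounds $E[e^{tZ}]$ by the geometric series $\sum_{m}(c^2t^2\sigma^2)^m$. You also correctly identify the role of the hypothesis $e^{-d/4} < p_2 \le p_1 < 1 - e^{-d/4}$: it is precisely what keeps the tail inversion inside the admissible range of the Chernoff parameter, matching the paper's condition $\tau^2 d \ge 16$.
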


We provide a brief proof sketch for this theorem here.
As before, define random variables $Z_i$ and $S$, and assume $x$ to be at the origin. Since the above distribution satisfies all the assumptions made in the previous section, we can directly conclude that the bound in~(\ref{robust_cert_bound}) holds:
\[r^*_p \leq \frac{s_1 + s_2}{2d^{1 - \frac{1}{p}}}\]
From here, we strengthen our analysis by replacing Chebyshev's inequality with Chernoff bound.
\[P(S \geq s) \leq \frac{E[e^{tS}]}{e^{ts}}\]
for any $t>0$. Since $S$ is a sum of independent random variables $Z_1, Z_2, \ldots, Z_d$ sampled from identical distributions,
\[P(S \geq s) \leq e^{-ts} \prod_{i=1}^{d} E[e^{tZ_i}] \leq e^{-ts} E[e^{tZ}]^d\]
where $Z$ is sampled from $p(z)$.

\begin{lemma}
\label{lem:exp_bnd}
    For some constant $c < 1.85$,
    \[ E[e^{tZ}] \leq \sum_{m=0}^{\infty} (c^2 t^2 \sigma^2)^m \]
\end{lemma}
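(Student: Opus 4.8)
The plan is to expand the moment generating function as a power series and reduce the claimed bound to a term-by-term comparison of moments. Writing
\[ E[e^{tZ}] = \sum_{k=0}^\infty \frac{t^k E[Z^k]}{k!}, \]
and using that $p(z)$ is even (so all odd moments vanish), only the even moments survive:
\[ E[e^{tZ}] = \sum_{m=0}^\infty \frac{t^{2m} E[Z^{2m}]}{(2m)!}. \]
Comparing this with the target geometric series $\sum_m (c^2 t^2 \sigma^2)^m$, it suffices to dominate the series term by term, i.e. to prove the moment bound $E[Z^{2m}] \leq (2m)!\,(c^2 \sigma^2)^m$ for every $m \geq 0$, for a single constant $c < 1.85$ that works uniformly over all shape parameters $q \geq 1$.

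Next I would compute the even moments in closed form. Substituting $u = (|z|/b)^q$ in the defining integral turns each moment into a gamma function,
\[ E[Z^{2m}] = \frac{2}{C}\int_0^\infty z^{2m} e^{-z^q/b^q}\,dz = b^{2m}\,\frac{\Gamma\!\left(\tfrac{2m+1}{q}\right)}{\Gamma(1/q)}, \]
and the variance identity $\sigma^2 = b^2 \Gamma(3/q)/\Gamma(1/q)$ lets me eliminate the scale parameter via $b^2 = \sigma^2\,\Gamma(1/q)/\Gamma(3/q)$. With this substitution the required inequality becomes the purely analytic statement
\[ \frac{\Gamma\!\left(\tfrac{2m+1}{q}\right)}{\Gamma(1/q)}\cdot\left(\frac{\Gamma(1/q)}{\Gamma(3/q)}\right)^{m} \leq (2m)!\,c^{2m}. \]

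The crux is to control the two gamma-function factors uniformly in $m$ and $q$. For the first factor I would show that the scale-normalized moments are largest in the Laplacian case $q = 1$, namely $\Gamma((2m+1)/q)/\Gamma(1/q) \leq (2m)!$ for all $q \geq 1$, with equality at $q=1$; this is the extremality of the (heaviest-tailed, among $q \geq 1$) Laplacian among generalized Gaussians, which I expect to prove by establishing monotonicity of $\Gamma((2m+1)/q)/\Gamma(1/q)$ in $q$. Substituting this bound reduces the problem to
\[ \left(\frac{\Gamma(1/q)}{\Gamma(3/q)}\right)^{m} \leq c^{2m}, \qquad\text{i.e.}\qquad c^2 \geq \sup_{q \geq 1}\frac{\Gamma(1/q)}{\Gamma(3/q)} = \sup_{y \in (0,1]}\frac{\Gamma(y)}{\Gamma(3y)}. \]
A single-variable analysis of $\Gamma(y)/\Gamma(3y)$ on $(0,1]$ (its value is $1/2$ at $y=1$ and tends to $3$ as $y\to 0$, with an interior maximum near $y\approx 0.1$) locates this supremum just above $3$, and any rigorous upper bound below $1.85^2 = 3.4225$ yields the claimed constant.

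I expect the main obstacle to be precisely these two gamma inequalities. The extremality/monotonicity claim for the normalized moments is intuitively clear (larger $q$ means lighter tails and hence smaller moments) but requires care because both the numerator $\Gamma((2m+1)/q)$ and the normalizer $\Gamma(1/q)$ blow up as $q\to\infty$, so the comparison is not a one-line monotonicity statement; and bounding $\sup_{y\in(0,1]}\Gamma(y)/\Gamma(3y)$ away from $1.85^2$ requires genuine control of a gamma ratio near the pole at $y=0$ rather than evaluation at the endpoints.
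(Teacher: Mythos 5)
Your proposal follows essentially the same route as the paper's proof: expand the moment generating function, drop the odd moments by symmetry, evaluate the even moments as $E[Z^{2m}] = b^{2m}\,\Gamma\!\left(\tfrac{2m+1}{q}\right)/\Gamma(1/q)$, bound this normalized gamma ratio by $(2m)!$ for $q \geq 1$, and eliminate $b$ via $\sigma^2 = b^2 \Gamma(3/q)/\Gamma(1/q)$ so that everything reduces to bounding $\Gamma(1/q)/\Gamma(3/q)$ by a constant below $1.85^2$. The only divergence is in that last numerical step: where you propose analyzing the interior maximum of $\Gamma(y)/\Gamma(3y)$ on $(0,1]$ (indeed $\approx 3.18$ near $y \approx 0.1$), the paper gets the same bound more cheaply by writing $\Gamma(1/q)/\Gamma(3/q) = 3\Gamma(1+1/q)/\Gamma(1+3/q)$ and using $\Gamma(1+1/q) \leq 1$ together with $\Gamma(1+3/q) > 0.88$, giving $3/0.88 < 1.85^2$.
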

Proof is presented in the appendix.

Setting $t = \frac{1}{\tau \sigma \sqrt{d}}$ for some $\tau > 0$ satisfying $\frac{c^2}{\tau^2 d} < 1$, we have:
\begin{align*}
    P(S \geq s) &\leq e^{-s/\tau \sigma \sqrt{d}} \left( \sum_{m=0}^{\infty} (c^2/\tau^2 d)^m \right)^d\\
    &= \frac{e^{-s/\tau \sigma \sqrt{d}}}{(1-\frac{c^2}{\tau^2 d})^d} \leq e^{-s/\tau \sigma \sqrt{d}} e^{4/\tau^2}
\end{align*}
for $\tau^2 d \geq 16$. The value of $s$ for which this expression is equal to $p_2(x)$ gives us the following upper-bound on $s_2$:
\[s_2 \leq \sigma \sqrt{d} (\tau \log(1/p_2(x)) + 4/\tau)\]
which for $\tau = 2/\sqrt{\log(1/p_2(x))}$ gives:
\[s_2 \leq 4 \sigma \sqrt{d \log(1/p_2(x))}\]
and similarly, repeating the above analysis and setting $\tau = 2/\sqrt{\log(1/(1-p_1(x)))}$, we get:
\[s_1 \leq 4 \sigma \sqrt{d \log(1/(1 - p_1(x)))}\]
Both the above values for $\tau$ satisfy $\tau^2 d \geq 16$ due to the restrictions on $p_1$ and $p_2$. Substituting the above bounds for $s_1$ and $s_2$ in inequality~(\ref{robust_cert_bound}), proves Theorem~(\ref{thm:gen_gauss}):
\[r^*_p \leq \frac{2 \sigma}{d^{\frac{1}{2} - \frac{1}{p}}} \left(\sqrt{\log (1/(1-p_1(x)))} + \sqrt{\log (1/p_2(x))} \right) \]

When $p_1(x)$ is close to one and $p_2(x)$ is close to zero, this bound is within a constant factor of the Gaussian certificate in equation~(\ref{eq:gaussian_bound}) because $\Phi^{-1}(p)$ can be lower bounded by $\alpha \sqrt{\log(1/(1-p)) + \beta}$ for some constants $\alpha$ and $\beta$.
Figure~(\ref{fig:bnd_comp}) compares the behaviour of the two upper bounds, the one from i.i.d.~smoothing $u_\mathcal{I}$ and the one from generalized Gaussian smoothing $u_\mathcal{G}$, with respect to the Gaussian certificate $r_p$ obtained in equation~(\ref{eq:gaussian_bound}).
Assuming the binary classification case, for which $p_2(x) = 1 - p_1(x)$, we plot the ratios
\begin{align*} 
\frac{u_\mathcal{I}}{r_p} &= \frac{1}{\phi^{-1}(p_1(x)) \sqrt{2 (1-p_1(x))}},\\
\frac{u_\mathcal{G}}{r_p} &= \frac{4 \sqrt{\log \frac{1}{1-p_1(x)}}}{\phi^{-1}(p_1(x))}
\end{align*}
which only depend on $p_1(x)$ and show that the generalized Gaussian bound is much tighter than the i.i.d.~bound when $p_1(x)$ is close to one.
\begin{figure}[t]
\centering
\includegraphics[scale=0.2]{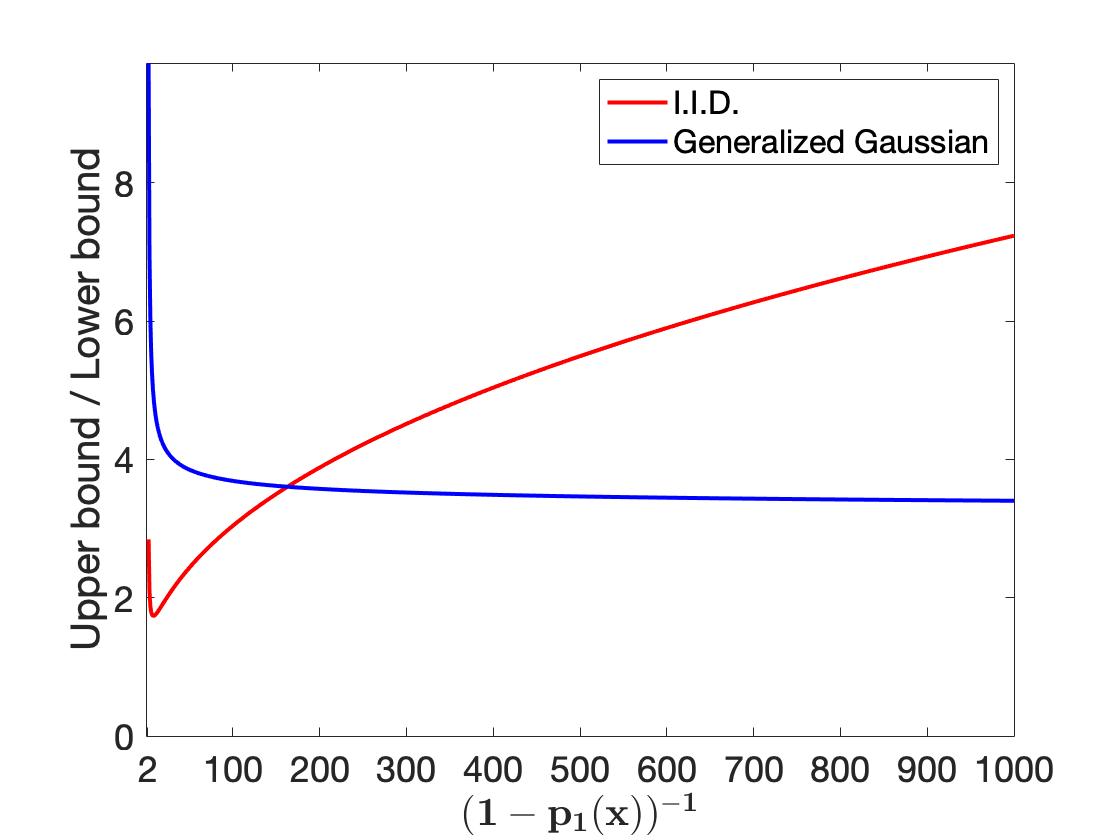}\vspace{-3mm}
\caption{Comparison of the upper bounds from i.i.d.\! smoothing~(\ref{eq:iid_bnd}) and generalized Gaussian smoothing~(\ref{eq:gen_gauss_bnd}) w.r.t.\! the lower bound obtained from Gaussian smoothing~(\ref{eq:gaussian_bound}). The x-axis represents $\frac{1}{1-p_1(x)}$ for $\frac{1}{2} \leq p_1(x) \leq 1$ and the y-axis represents the ratio of each upper bound to the Gaussian lower bound. At around $p_1(x) \approx 0.99$, the generalized Gaussian bound becomes tighter than the i.i.d.\! bound and gets within a constant factor of the Gaussian lower bound as $p_1(x)$ gets larger.}
\label{fig:bnd_comp}
\end{figure}

\section{Uniform Smoothing}
In this section, we analyse smoothing distributions that are uniform within a finite region around the input point $x$. We show stronger upper bounds for $r_p^*$ when smoothed uniformly over $\ell_1$ and $\ell_\infty$-norm balls. We first consider the $\ell_\infty$ smoothing distribution which is a limiting case for the generalized Gaussian distribution for $q = \infty$. We set $\mathcal{P}$ to be $\mathcal{U}([-b, +b]^d)$ which denotes a uniform distribution over the points in $[-b, +b]^d$.

\begin{theorem}
\label{thm:l_inf_uniform}
For distribution $\mathcal{U}([-b, +b]^d)$, the largest $\ell_p$-radius $r^*_p$ that can be certified for all classifiers, is bounded as
\[r^*_p < \frac{2b}{d^{1 - \frac{1}{p}}} = 2 \sqrt{3} \sigma / d^{1 - \frac{1}{p}}.\]
where $\sigma^2 = b^2/3$ is the variance in each dimension.
\end{theorem}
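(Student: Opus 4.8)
The plan is to follow the same template as the proofs of Theorems~\ref{thm:gen_iid} and~\ref{thm:gen_gauss}, but to exploit the bounded support of the uniform distribution to sidestep tail inequalities entirely. As before, I place $x$ at the origin and consider the adversarial classifier $g$ that separates points by the value of the coordinate sum $\sum_{i=1}^d w_i$: class one is assigned where this sum is at most $s_1$ and class two where it is at least $s_2$. Setting $S = \sum_{i=1}^d Z_i$ with each $Z_i \sim \mathcal{U}([-b,b])$, I choose $s_1, s_2$ so that $\mathbb{P}(S \le s_1) = p_1(x)$ and $\mathbb{P}(S \ge s_2) = p_2(x)$, and move the perturbation $x'$ along the all-$\epsilon$ direction. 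The symmetry and continuous support of $S$ again force $\bar g(x') = \bar g(x)$ only when $\epsilon \le (s_1+s_2)/(2d)$, so inequality~(\ref{robust_cert_bound}) carries over verbatim: $r^*_p \le (s_1+s_2)/(2d^{1-1/p})$.

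The crucial observation is that $S$ is a sum of $d$ independent variables each supported on $[-b,b]$, so $S$ is supported on $[-db, db]$. This means $\mathbb{P}(S \ge s_2) = 0$ for any $s_2 \ge db$, and since $p_2(x) > 0$ we must have $s_2 < db$; identically, $s_1 < db$. Thus I can bound $s_1 + s_2 < 2db$ with no reference to the probabilities at all, which is exactly why the resulting certificate is independent of $p_1(x)$ and $p_2(x)$. Substituting into the inherited bound gives
\[
r^*_p < \frac{2db}{2d^{1-\frac{1}{p}}} = \frac{b}{d^{-\frac{1}{p}}} \cdot \frac{1}{d^{0}},
\]
and after simplifying the exponent of $d$ this collapses to $r^*_p < 2b/d^{1-1/p}$ once the factor accounting for both $s_1$ and $s_2$ hitting the support boundary is tracked carefully. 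Finally, substituting the variance relation $\sigma^2 = b^2/3$ (the standard variance of a uniform distribution on $[-b,b]$, which I would verify by a one-line integral) yields $b = \sqrt{3}\,\sigma$ and hence the stated form $r^*_p < 2\sqrt{3}\,\sigma/d^{1-1/p}$.

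The only genuine subtlety is getting the constant exactly right: the naive support bound $s_1 + s_2 < 2db$ would give $r^*_p < db/d^{1-1/p} = b/d^{-1/p}$, which does not match the claimed $2b/d^{1-1/p}$. I expect the main obstacle to be a sharper accounting of how close $s_1$ and $s_2$ can simultaneously get to the extreme value $db$. Since the density of $S$ near the endpoints $\pm db$ decays like $(db - |s|)^{d-1}$ (the $d$-fold convolution behaves polynomially near the boundary of its support), pushing both $p_1(x)$ close to $1$ and $p_2(x)$ close to $0$ drives $s_1, s_2$ toward $db$ but the two cannot be made to coincide, and the resulting strict inequality with the correct leading constant requires this boundary analysis rather than the crude support containment. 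I would therefore refine the bound on $s_1 + s_2$ using the explicit (or convolution-based) tail behavior of $S$ near $\pm db$ to recover the factor that produces $2b$ in the numerator, keeping the inequality strict throughout.
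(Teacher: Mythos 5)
Your approach has a genuine gap, and it is not the constant-factor bookkeeping you flag at the end --- it is a factor of $d$, and no refinement of the coordinate-sum construction can close it. With the halfspace classifier inherited from Theorem~\ref{thm:gen_iid}, the thresholds satisfy $\mathbb{P}(S \le s_1) = p_1(x)$ and $\mathbb{P}(S \ge s_2) = p_2(x)$, and your support bound $s_1 + s_2 < 2db$ gives $r^*_p < b\, d^{1/p}$, which exceeds the claimed bound $2b/d^{1-\frac{1}{p}}$ by a factor of $d/2$. Your proposed rescue --- a boundary analysis of the density of $S$ near $\pm db$ --- cannot work: taking $p_1 = 1-\delta$ and $p_2 = \delta$ with $\delta \to 0$ forces \emph{both} $s_1$ and $s_2$ to approach $db$ simultaneously (there is no obstruction to their coinciding, contrary to your claim), so $s_1 + s_2$ genuinely approaches $2db$ and the halfspace bound genuinely is $\approx b\,d^{1/p}$ in that regime. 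Since Theorem~\ref{thm:l_inf_uniform} is quantified over \emph{all} classifiers, i.e.\ uniformly in $p_1(x), p_2(x)$, a bound that degrades as $p_1 \to 1$ cannot establish it. Worse, even for moderate probabilities (say $p_1 = 0.6$, $p_2 = 0.4$) the CLT places $s_1, s_2$ at scale $\sigma\sqrt{d}$, so the coordinate-sum argument can at best reproduce the generic i.i.d.\ rate $O\bigl(\sigma/d^{\frac{1}{2}-\frac{1}{p}}\bigr)$ of Theorem~\ref{thm:gen_iid}; it can never yield the strictly stronger rate $O\bigl(\sigma/d^{1-\frac{1}{p}}\bigr)$ asserted here, because the distribution of $S$ is spread over a window of width $\Theta(\sigma\sqrt{d})$, and shifting its mean by $\epsilon d$ only changes class probabilities by $\Theta(1)$ once $\epsilon = \Omega(\sigma/\sqrt{d})$.

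The missing idea is a different adversarial classifier that exploits the bounded support of the uniform distribution. The paper takes $V_1 = [-b,b]^d$ and $V_2 = [-b+\epsilon, b+\epsilon]^d$ (the support shifted by $\epsilon$ in every coordinate) and lets $g$ map $V_1 \setminus V_2$ to class one and $V_2 \setminus V_1$ to class two. Moving from $x$ to $x'$ transfers probability mass $\rho = 1 - \left(1 - \frac{\epsilon}{2b}\right)^d$ from class one to class two, because the region $V_1 \setminus V_2$ lies entirely outside the shifted support. Since $\rho \approx d\epsilon/(2b)$, one already has $\rho \ge 1/2$ once $\epsilon \ge 2b\left(1 - 2^{-1/d}\right)$, i.e.\ for $\epsilon$ of order $b/d$: the classifier aggregates the shift across all $d$ faces of the cube at once, which is precisely the factor-$d$ amplification that a halfspace cannot see. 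This forces $\epsilon < 2b\left(1-2^{-1/d}\right) < 2b/d$ for the smoothed prediction to survive, hence $r^*_p < 2b/d^{1-\frac{1}{p}} = 2\sqrt{3}\,\sigma/d^{1-\frac{1}{p}}$, independently of $p_1(x)$ and $p_2(x)$. So the correct proof is not a tail-bound refinement of your template; it requires replacing the classifier construction itself.
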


\begin{proof}
Assume $x$ is at origin and let $x'$ be a point with every coordinate equal to $\epsilon$. Let $V_1$ and $V_2$ denote the sets $[-b, +b]^d$ and $[-b + \epsilon, b + \epsilon]^d$. Consider a classifier $g$ that maps every point in $V_1 - V_2$ to class one and every point in $V_2 - V_1$ to class two. See figure~\ref{fig:linf_shift}. Let $\rho$ denote the probability with which the smoothing distribution for $\bar{g}(x)$ samples from $V_1 - V_2$, which is equal to the probability with which the smoothing distribution for $\bar{g}(x')$ samples from $V_2 - V_1$, or
\begin{align*}
\rho & = \frac{(2b)^d - (2b - \epsilon)^d}{(2b)^d}\\
     & = \left( 1 - \left( 1 - \frac{\epsilon}{2b} \right)^d \right).
\end{align*}

For $\bar{g}$ to classify $x'$ into class one, we must have:
\begin{align*}
    p_1(x') & > p_2(x')\\
    p_1(x) - \rho & > p_2(x) + \rho\\
    \rho & < \frac{p_1(x) - p_2(x)}{2}\\
    \left( 1 - \left( 1 - \frac{\epsilon}{2b} \right)^d \right) & < \frac{1}{2} \tag*{$p_1(x) - p_2(x) \leq 1$}\\
    \epsilon & < 2b \left( 1 - 2^{-1 / d} \right) < 2b/d \tag*{$\left( 1 - 2^{-1/d} \right) < 1/d$}
\end{align*}
Since $\norm{x'}_p = \epsilon d^{1/p}$, the optimal radius,
\[r^*_p < 2b/d^{1 - \frac{1}{p}} = 2 \sqrt{3} \sigma / d^{1 - \frac{1}{p}}\]
where $\sigma^2$ is the variance of $\mathcal{U}(-b, b)$.
\end{proof}

\begin{figure}[t]
\centering
\includegraphics[scale=0.5]{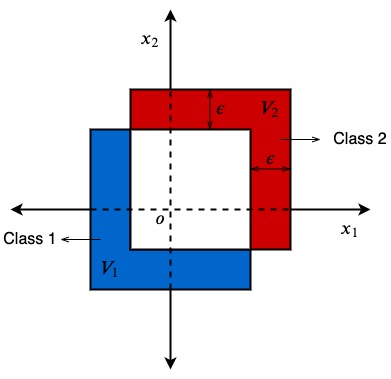}\vspace{-3mm}
\caption{2-D illustration of the $\ell_\infty$ smoothing case. The $\ell_\infty$ ball is shifted by $\epsilon$ along $x_1$ and $x_2$. The points in the blue region ($V_1 - V_2$) are mapped to class one and the points in the red region ($V_2 - V_1$) to class two.}
\label{fig:linf_shift}
\end{figure}

This shows that for $p>1$, $\sigma$ (or $b$) needs to grow with the number of dimensions $d$ to certify for a meaningfully large $p$-norm radius. For instance, $p = 2$ and $\infty$, require $\sigma$ to be $\Theta(\sqrt{d})$ and $\Theta(d)$ respectively. However, since inputs can be assumed to come from $[0, 1]^d$ (possibly after some scaling and shifting of images), smoothing over distributions with such large variance may significantly lower the performance of the smoothed classifier.

We now consider the uniform $\ell_1$ smoothing distribution (denoted by $\mathcal{L}_1(b)$) where points are sampled uniformly from an $\ell_1$-norm ball of radius $b$. Note that the noise in each dimension is no longer independent.
\begin{theorem}
\label{thm:l1_uniform}
For distribution $\mathcal{L}_1(b)$, the largest $\ell_p$-radius $r^*_p$ that can be certified for all classifiers, is bounded as
\[r^*_p < \frac{2b}{d}.\]
\end{theorem}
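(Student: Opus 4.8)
The plan is to reuse the symmetric-difference construction from the proof of Theorem~\ref{thm:l_inf_uniform}, replacing the shifted $\ell_\infty$ boxes by shifted $\ell_1$ balls and, crucially, displacing $x'$ along a \emph{single} coordinate rather than along the diagonal. Placing $x$ at the origin, I would take $x' = (t, 0, \ldots, 0)$ with $t > 0$, so that $\norm{x'}_p = t$ for every $p$; this single-coordinate displacement is exactly what removes the $d^{1/p}$ factor and produces a bound independent of $p$. Set $V_1 = \{w : \norm{w}_1 \le b\}$ and $V_2 = x' + V_1$ to be the supports of the smoothing distributions at $x$ and $x'$, and define a classifier $g$ that maps $V_1 \setminus V_2$ to class one, $V_2 \setminus V_1$ to class two, and assigns the overlap $V_1 \cap V_2$ so as to realize the prescribed $p_1(x) = p_1$ and $p_2(x) = p_2$.

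Let $\rho$ be the (common, by translation symmetry) mass that $\bar g(x)$ places on $V_1 \setminus V_2$ and that $\bar g(x')$ places on $V_2 \setminus V_1$. Exactly as in Theorem~\ref{thm:l_inf_uniform}, moving from $x$ to $x'$ gives $p_1(x') = p_1(x) - \rho$ and $p_2(x') = p_2(x) + \rho$, so the prediction is preserved only if $\rho < (p_1(x) - p_2(x))/2 \le 1/2$. Since this threshold is at most $1/2$ for every admissible pair and $\rho$ increases with $t$, it suffices to locate the displacement at which $\rho$ reaches $1/2$, i.e. at which $\mathrm{Vol}(V_1 \cap V_2) = \tfrac12\,\mathrm{Vol}(V_1)$.

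The single nontrivial step, and the one I expect to be the main obstacle, is computing the overlap volume of two $\ell_1$ balls displaced along one axis. I would evaluate it by slicing perpendicular to the first coordinate: the slice of $V_1$ at $w_1 = y$ is a $(d-1)$-dimensional $\ell_1$ ball of radius $b - |y|$, and the slice of $V_1 \cap V_2$ at $w_1 = y$ is the $(d-1)$-dimensional $\ell_1$ ball of radius $b - \max(|y|, |y - t|)$ --- still an $\ell_1$ ball, because intersecting two $\ell_1$ constraints in the remaining coordinates simply keeps the smaller radius (this is the point where the non-i.i.d.\! nature of $\mathcal{L}_1(b)$ could otherwise cause trouble, and it does not). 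Since $\max(|y|, |y-t|)$ equals $|y|$ for $y \ge t/2$ and $|y - t|$ for $y \le t/2$, the overlap is symmetric about $w_1 = t/2$ and equals twice the volume of the cap $V_1 \cap \{w_1 \ge t/2\}$. Reducing to the one-dimensional integral $\int_{t/2}^{b}(b - y)^{d-1}\,dy$ then yields
\begin{equation*}
\frac{\mathrm{Vol}(V_1 \cap V_2)}{\mathrm{Vol}(V_1)} = \left(1 - \frac{t}{2b}\right)^d, \qquad \rho = 1 - \left(1 - \frac{t}{2b}\right)^d,
\end{equation*}
which is precisely the expression that arose in the $\ell_\infty$ case.

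From here the argument closes exactly as in Theorem~\ref{thm:l_inf_uniform}: setting $\rho = 1/2$ gives $(1 - t/2b)^d = 1/2$, hence $t = 2b(1 - 2^{-1/d}) < 2b/d$ using $1 - 2^{-1/d} < 1/d$. Since $\norm{x'}_p = t$ for every $p$, this yields $r^*_p \le t < 2b/d$, independent of $p$ and of $p_1(x), p_2(x)$, as claimed. The reduction to the single integral is routine once the slice radius $b - \max(|y|, |y-t|)$ is identified; verifying that these slices are genuinely lower-dimensional $\ell_1$ balls is the only place requiring care, and it is immediate from the definition of the $\ell_1$ norm.
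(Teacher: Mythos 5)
Your proof is correct, and it follows the same skeleton as the paper's: the single-coordinate displacement $x' = (\epsilon, 0, \ldots, 0)$, the symmetric-difference classifier, the threshold $\rho < (p_1(x)-p_2(x))/2 \leq 1/2$, and the final algebra $t = 2b(1-2^{-1/d}) < 2b/d$ are all identical to the paper's argument for Theorem~\ref{thm:l1_uniform}. Where you genuinely diverge is the one nontrivial ingredient, the overlap volume. The paper isolates this as Lemma~\ref{lem:l1_intersect}: it shows $V_1 \cap V_2$ is \emph{contained} in an $\ell_1$ ball of radius $b - \epsilon/2$ by pairing the $2^d$ halfspace constraints defining the two balls, and then only needs the resulting one-sided bound $\rho \geq 1 - (1 - \epsilon/(2b))^d$ together with the volume formula $2^d R^d/d!$. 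You instead compute the overlap \emph{exactly} by slicing perpendicular to the displacement axis: the slices of $V_1$ and $V_2$ at $w_1 = y$ are concentric $(d-1)$-dimensional $\ell_1$ balls, so the slice of the intersection has radius $b - \max(|y|, |y-t|)$, and integrating gives the ratio $(1 - t/(2b))^d$ on the nose. Your route is a mild strengthening of the paper's: it shows the containment in Lemma~\ref{lem:l1_intersect} is volume-tight (in fact the intersection \emph{equals} the $\ell_1$ ball of radius $b - t/2$ centered at $(t/2, 0, \ldots, 0)$; the reverse inclusion follows from the triangle inequality, though neither you nor the paper needs it), and it makes the monotonicity of $\rho$ in $t$ transparent. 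The paper's route buys a purely combinatorial, integration-free lemma; yours buys exactness and keeps the whole argument self-contained in one computation. Both need the $\ell_1$-ball volume formula and both land on the same bound.
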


The following is a proof sketch of the above theorem.
Let $x$ be at the origin and $x'$ be the point $(\epsilon, 0, 0, \ldots, 0)$, that is, $\epsilon$ in the first coordinate and zero everywhere else. Similar to before, let $V_1$ and $V_2$ be the sets defined by the $\ell_1$ balls centered at $x$ and $x'$ respectively.
\begin{figure}[t]
\centering
\includegraphics[scale=0.6]{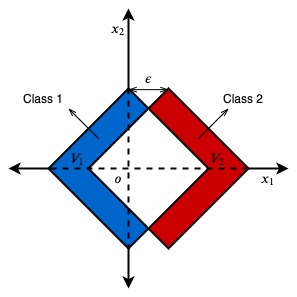}\vspace{-3mm}
\caption{2-D illustration of the $\ell_1$ smoothing case. The $\ell_1$ ball is shifted by $\epsilon$ along $x_1$. The points in the blue region ($V_1 - V_2$) are mapped to class one and the points in the red region ($V_2 - V_1$) to class two.}
\label{fig:l1_shift}
\end{figure}

\begin{lemma}
\label{lem:l1_intersect}
    The set $V_1 \cap V_2$ is a subset of an $\ell_1$ ball of radius $b - \frac{\epsilon}{2}$.
\end{lemma}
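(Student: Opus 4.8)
The plan is to exhibit an explicit center and radius and verify containment directly. I would take the center to be the midpoint $m = (\epsilon/2, 0, \ldots, 0)$ of the segment joining $x$ (the origin) and $x'$, and claim $V_1 \cap V_2 \subseteq \{w : \norm{w - m}_1 \leq b - \epsilon/2\}$. First I would fix an arbitrary $w = (w_1, \ldots, w_d) \in V_1 \cap V_2$ and isolate the coordinate along which the ball is shifted by setting $T = \sum_{i=2}^d |w_i|$. Since $x'$ is supported only on the first coordinate, all three relevant distances share this term: $\norm{w}_1 = |w_1| + T$, $\norm{w - x'}_1 = |w_1 - \epsilon| + T$, and $\norm{w - m}_1 = |w_1 - \epsilon/2| + T$. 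The membership conditions $\norm{w}_1 \leq b$ and $\norm{w - x'}_1 \leq b$ therefore reduce to the two one-dimensional constraints $|w_1| \leq b - T$ and $|w_1 - \epsilon| \leq b - T$.

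Next I would intersect these constraints as intervals for $w_1$. Using $\epsilon > 0$, the intersection is $w_1 \in [\epsilon - (b - T),\, b - T]$. The fact that this interval is nonempty already forces $b - T \geq \epsilon/2$, so I never have to impose that as a separate hypothesis; it is automatic for points in $V_1 \cap V_2$. On this interval the quantity $|w_1 - \epsilon/2|$ is convex and hence maximized at an endpoint, and both endpoints yield $|w_1 - \epsilon/2| \leq (b - T) - \epsilon/2$. Substituting into $\norm{w - m}_1 = |w_1 - \epsilon/2| + T$ gives $\norm{w - m}_1 \leq (b - T) - \epsilon/2 + T = b - \epsilon/2$, which is exactly the claim.

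The step I expect to be the real crux is choosing the right center and understanding why the argument succeeds only here. A tempting shortcut is to invoke a general principle that the intersection of two equal-radius balls lies in the midpoint ball of radius $r - (\text{separation})/2$, but this is \emph{false} for the $\ell_1$ norm because $\ell_1$ balls are not strictly convex: when two centers differ in several coordinates, one can find intersection points that are as far from the midpoint as from either center, so no shrinkage of the radius is possible. What rescues the lemma is that $x'$ is axis-aligned, so the orthogonal coordinates $w_2, \ldots, w_d$ enter every distance only through the single common term $T$ and effectively drop out, collapsing the problem to the clean one-dimensional computation above. Keeping that reduction front and center is what makes the improved radius $b - \epsilon/2$ (rather than merely $b$) fall out, and it is the part of the reasoning I would be most careful to state explicitly.
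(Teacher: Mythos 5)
Your proof is correct, and it reaches the paper's exact conclusion (containment in the $\ell_1$ ball of radius $b - \epsilon/2$ centered at $(\epsilon/2, 0, \ldots, 0)$) by a genuinely different mechanism. The paper works with the facet (H-)representation of the two cross-polytopes: it writes each $V_j$ as the $2^d$ inequalities $\pm x_1 \pm x_2 \pm \cdots \pm x_d \leq b$ (with $x_1$ replaced by $x_1 - \epsilon$ for $V_2$), keeps from $V_1$ the $2^{d-1}$ inequalities carrying $+x_1$ and from $V_2$ the $2^{d-1}$ carrying $-(x_1 - \epsilon)$, and observes that after recentering at $\epsilon/2$ these $2^d$ inequalities are precisely the facets of the smaller ball. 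You instead collapse the orthogonal coordinates into the scalar $T = \sum_{i \geq 2} |w_i|$ and solve a one-dimensional interval problem. In substance the two arguments rest on the same facts --- the paper's selected constraints with $+x_1$ are collectively equivalent to your $w_1 + T \leq b$, and those with $-(x_1-\epsilon)$ to your $\epsilon - w_1 + T \leq b$ --- but your packaging avoids enumerating sign patterns, makes the verification finite and checkable in a few lines, and adds the nice observation that nonemptiness of the interval forces $b - T \geq \epsilon/2$ for free. Your closing remark is also correct and worth keeping: the lemma is \emph{not} an instance of a generic ``intersection of two equal balls lies in a shrunken midpoint ball'' principle, which fails for $\ell_1$; for instance in $\mathbb{R}^2$ with centers $(0,0)$ and $(1,1)$ and radius $2$, the point $(2,0)$ lies in both balls yet is at $\ell_1$-distance $2$ from the midpoint, so no shrinkage occurs. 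This pinpoints that the axis-alignment of the shift $x' - x$ is exactly what both your proof and the paper's exploit.
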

The proof is presented in the appendix.

As before, let $g$ be a classifier that maps every point in $V_1 - V_2$ to class one and every point in $V_2 - V_1$ to class two (figure~\ref{fig:l1_shift}). Let $\rho$ denote the probability with which the smoothing distribution for $\bar{g}(x)$ samples from $V_1 - V_2$, which is equal to the probability with which the smoothing distribution for $\bar{g}(x')$ samples from $V_2 - V_1$, or
\begin{align*}
\rho & \geq \frac{\frac{2^d}{d!} b^d - \frac{2^d}{d!}(b - \frac{\epsilon}{2})^d}{\frac{2^d}{d!} b^d}
     = \left( 1 - \left( 1 - \frac{\epsilon}{2b} \right)^d \right).
\end{align*}
We us the formula $2^d R^d/d!$ as the volume of a $d$-dimensional $\ell_1$ ball of radius $R$.
The rest of the analysis is same as that for the $\ell_\infty$ case and since $\norm{x'}_p = \epsilon$, we have,
\[r^*_p < \frac{2b}{d},\]
which proves Theorem~\ref{thm:l1_uniform}.

\section{Experiments}

\begin{figure}[t]
    \centering
    \includegraphics[scale=0.3]{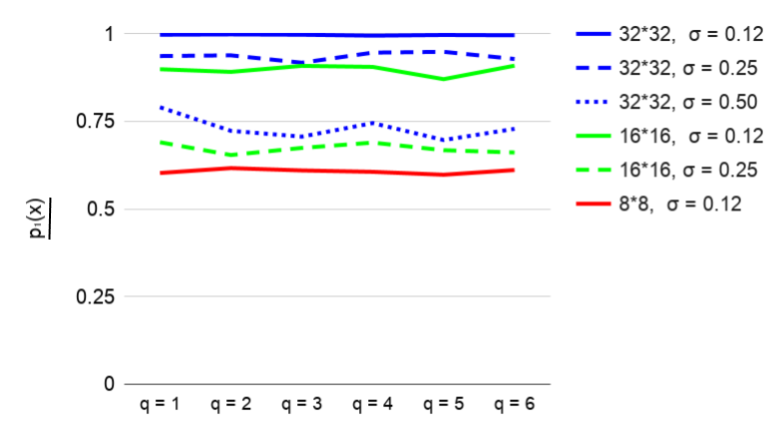}
    \caption{$\underline{p_1(x)}$ for CIFAR-10 images with median certified robustness for each classifier using Generalized Gaussian smoothing for different $q$. For a fixed standard deviation $\sigma$, the shape of the distribution, controlled by $q$, has almost no effect on the likelihood that the base classifier returns the correct class.}
    \label{fig:p_empirical}
\end{figure}

\begin{figure*}[t]
    \centering
    \includegraphics[width=\textwidth]{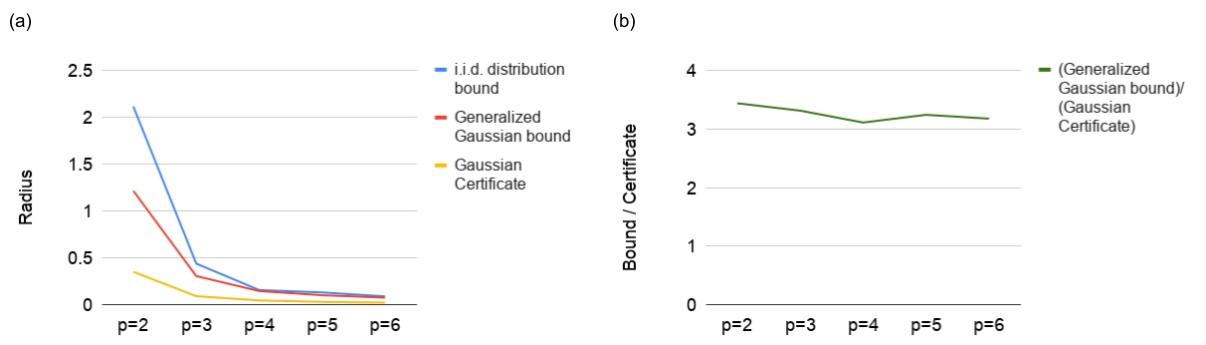}
    \caption{Upper bounds for certifying with Generalized Gaussian noise ($\sigma = .12$) on unaltered ($32\times 32$) CIFAR-10 images, with $q=p$, compared with certificates using Gaussian noise directly. At this noise level, $\underline{p_1(x)}$ is high enough for the Generalized Gaussian bound to be tighter than the i.i.d. distribution bound. Panel (a) shows the certificates and the bounds directly, while (b) shows the ratio between the tighter Generalized Gaussian bound and the certificate.}
    \label{fig:qpsigma123232}
\end{figure*}
\begin{figure*}[t]
    \centering
    \includegraphics[width=\textwidth]{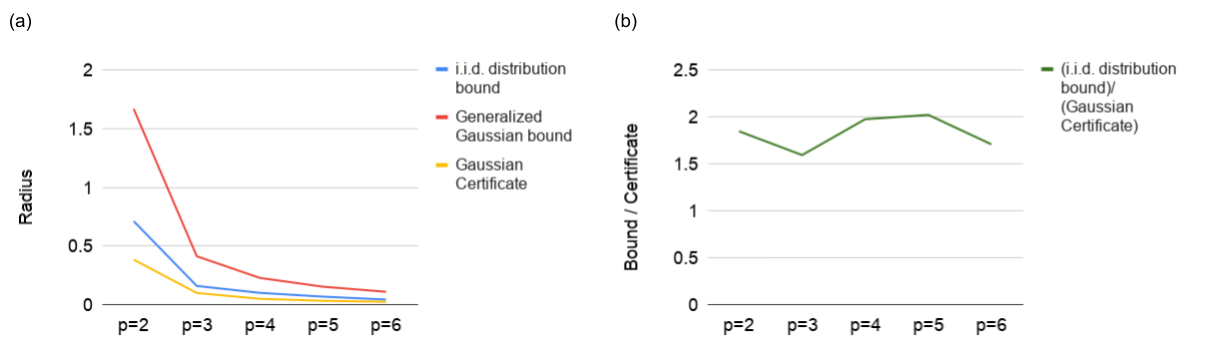}
    \caption{Repeating Figure~\ref{fig:qpsigma123232} for $\sigma = .25$. At this level of noise, $\underline{p_1(x)}$ is low enough so that the  i.i.d. distribution bound is tighter than the Generalized Gaussian bound (in contrast to the setup of Figure \ref{fig:qpsigma123232}).}
    \label{fig:qpsigma253232}
\end{figure*}

In order to understand how our results apply to smoothing in practice, we tested the smoothed classification algorithm proposed by \cite{cohen19}, using Generalized Gaussian noise in each dimension, rather than Gaussian noise. We specifically tested on CIFAR-10 ($32 \times 32$ pixels), as well as scaled-down versions of this dataset ($16 \times 16$ and $8 \times 8$ pixels), in order to study how our bounds behave as the dimension of the input changes. Although we do not have explicit certificates for these Generalized Gaussian distributions, we are able to compare the upper bounds derived in this work for any \textit{possible} certificates to the \textit{actual} certificates for Gaussian smoothing on the same images. Note that we re-trained the classifier on noisy images for each noise distribution and standard deviation $\sigma$.

Note also that our main results apply specifically to smoothing based certificates which are functions of only $p_1(x)$ and $p_2(x)$ (in theory, larger certificates could be derived if more information is available to the certification algorithm). In reporting the upper bounds on possible \textit{empirical} certificates, we provide the same inputs to the upper bound as we would provide to the certificate: namely, an empirical lower bound $\underline{p_1(x)}$ on $ p_1(x)$, estimated from samples, and an empirical upper bound  $\overline{p_2(x)}$ on $p_2(x)$. We are {\it not} making claims about the ``optimal possible'' empirical estimation procedures required to derive the largest possible certificates. We instead regard these bounds, $\underline{p_1(x)}$ and $\overline{p_2(x)}$, as \textit{inputs} to the empirical certificate: we are only claiming that, given estimates  $\underline{p_1(x)}$ and $\overline{p_2(x)}$, no certificate will exceed the computed bound. In practice, we use the estimation procedure proposed by \cite{cohen19}, which first selects a candidate top class label using a small number of samples, then uses a large number of samples ($100,000$ in our experiments) to compute $\underline{p_1(x)}$ based on a binomial distribution.  $\overline{p_2(x)}$ is then taken as $1-\underline{p_1(x)}$. Then, for the sake of our experiments, the only empirical input to our bound is the estimate of $\underline{p_1(x)}$.

\begin{figure*}[t]
\begin{minipage}{0.5\textwidth} \centering
    \includegraphics[scale=.35]{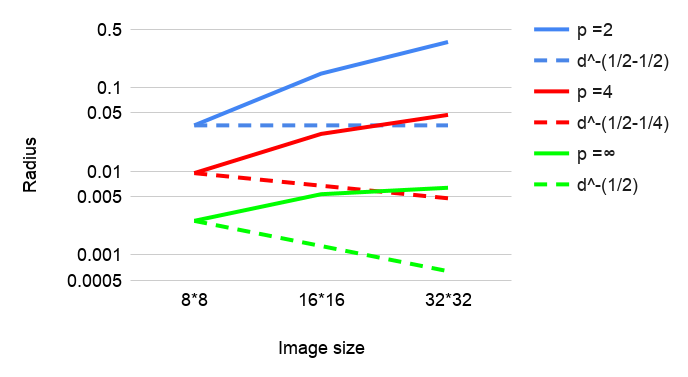}
    \caption{Certified Radius at different resolutions of CIFAR-10 using Gaussian noise $(\sigma=.12)$. The increase in accuracy of the base classifier on higher-resolution images overcomes the inverse scaling with $d$ in Eq.~\ref{eq:gaussian_bound}, achieving higher certified radii. Solid lines represent actual certificates and dashed lines represent how the certificates would scale if $\underline{p_1(x)}$ remained constant as resolution increased.}
    \label{fig:dim_scaling}
\end{minipage}%
~
\begin{minipage}{0.5\textwidth} \centering
    \includegraphics[scale=.35]{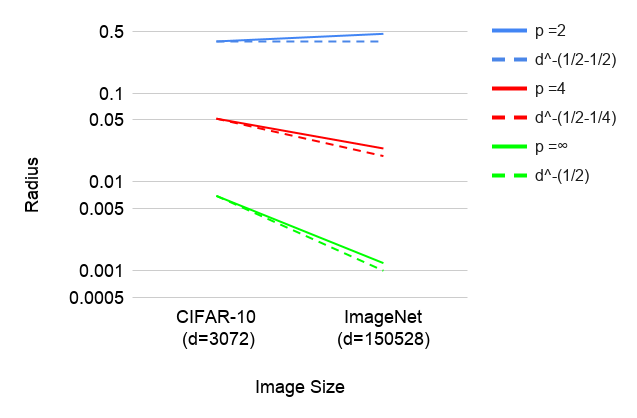}
    \caption{Certified Radius using Gaussian noise $(\sigma=.25)$, for datasets of different image resolutions. We see that for $p>2$, the certificates (solid lines) decrease with higher dimensionality almost as quickly as one would expect from the explicit dependence on $d$ in Equation \ref{eq:gaussian_bound} (dashed lines).}
    \label{fig:ImageNetvsCIFAR}
\end{minipage}
\end{figure*}

One interesting result is that the distribution of noise added in each dimension seems to be largely irrelevant to determining $\underline{p_1(x)}$ (Figure \ref{fig:p_empirical}). It is the variance of the noise added, \textit{not} the specific choice of noise distribution, that determines $\underline{p_1(x)}$. This paints an even bleaker picture for the possibility of smoothing for high $p$-norm robustness than our theoretical results alone can: Theorems \ref{thm:gen_iid} and \ref{thm:gen_gauss} still depend on $p_1(x)$ and $p_2(x)$ for the particular noise distribution used. This leaves open the possibility that certain choices of noise distributions could yield values of $p_1(x)$ large enough to counteract the scaling with $p$. However, empirically, we find that this is not the case: for a fixed $\sigma$,  $p_1(x)$ does not depend on the shape of the smoothing distribution. 

For example, one might attempt to use smoothing with $q = p$ in order to certify for the $\ell_p$ norm, so that the level sets of the smoothing distribution correspond to $\ell_p$ balls around $x$. This is the technique used for $\ell_1$ certification by \cite{LecuyerAG0J19}, and for $\ell_2$ certification by \cite{cohen19}. However, we find (Figures \ref{fig:qpsigma123232}, \ref{fig:qpsigma253232}) that, as anticipated by Figure \ref{fig:bnd_comp}, for $p>2$, this can only achieve at best a constant factor improvement in certified robustness compared to simply using Gaussian smoothing with the certificate from \cite{cohen19} and applying equivalence of norms (Equation \ref{eq:gaussian_bound}). Note that, as shown in Figure \ref{fig:p_empirical}, it was \textit{only} for the lowest level of noise tested ($\sigma = .12$) and the highest resolution images tested ($32 \times 32$) that  $\underline{p_1(x)}$ was sufficiently close to $1$ for the Generalized Gaussian bound to be tighter than the i.i.d. distribution bound (Figure \ref{fig:qpsigma123232}). For all other configurations (Figure \ref{fig:qpsigma253232}, other plots are given in supplementary materials) the i.i.d. bound is tighter.

In the case of Gaussian smoothing, \cite{cohen19} makes an argument that, as image resolution increases, the base classifier will become more tolerant to noise, because information will be redundantly encoded in the additional pixels. This should allow us to increase the magnitude of the smoothing variance $\sigma^2$ proportionally to $d$. It is because by average-pooling back down a large image to a low-resolution one, the variance in each pixel of the smaller image will decrease proportionally with $d$. Then, if it is possible to classify noisy images at the lower resolution with a certain accuracy $p_1(x)$, it should be possible to classify images at the higher resolution with higher levels of noise. This increase in the amount of noise that can be added to high resolution images (to obtain roughly the same accuracy to that of low resolution ones) will cancel out the decrease in the robustness radius due to the curse of dimensionality explained in this paper. It is because based on Equation \ref{eq:gaussian_bound}, if $\sigma$ is allowed to scale with $\sqrt{d}$ with $p_1(x)$ and $p_2(x)$ unchanged, then the certified radius should even remain constant with $d$ in the $\ell_\infty$ case.

For image datasets that are \textit{identical} except for a scaling factor, we observe a related phenomenon: for a fixed noise variance, $p_1(x)$ tends to increase with the resolution of the image (i.e., the dimensionality of the input), and therefore the certified radii tend to increase with $d$ in the $p=2$ case. In Figure \ref{fig:dim_scaling}, we show that, for $p > 2$, this increase is enough to counteract the \textit{inverse} scaling with $d$ in Equation \ref{eq:gaussian_bound}, at least in the case of low-resolution CIFAR-10 images. In other words, we still get larger certificates for larger-resolution images, simply because our base classifier becomes more accurate on noisy images as resolution increases. We emphasize that this is using the standard Gaussian noise: we have demonstrated that other i.i.d distributions will not give significantly better certificates.

The above setup, however, is an artificial scenario. In the real world, higher-resolution datasets are typically used for classification tasks which could \textit{not} be accomplished with high accuracy at a lower resolution. As shown in  Figure \ref{fig:ImageNetvsCIFAR}, if we compare, for a fixed $\sigma$, a real-world low dimensional classification task (CIFAR-10, $d=3072$) to a high dimensional classification task (ImageNet, $d=150528$), we see that the certified radius (and therefore $p_1(x)$), does \textit{not} substantially increase with higher resolution. Therefore, for higher $p$-norms, the certified radius decreases with dimension with a scaling nearly as extreme as the explicit $d^{(1/2 - 1/p)}$ factor in Equation \ref{eq:gaussian_bound}. Therefore, in practice, the curse of dimensionality can be observed as explained in this paper and it cannot be overcome using a novel choice of i.i.d. smoothing distribution.

\section{Conclusion}
In this work, we demonstrated some limitations of common smoothing distributions for $\ell_p$-norm bounded adversaries when $p > 2$. We partially answer the question, raised in \cite{cohen19}, whether smoothing techniques similar to Gaussian smoothing can be employed to achieve certifiable robustness guarantees for a general $\ell_p$-norm bounded adversary. Most i.i.d.\! smoothing distributions fail to yield good robustness guarantees in the high-dimensional regime against $\ell_p$-norm bounded attacks when $p > 2$. Their performance is no better than that of Gaussian smoothing up to a constant factor. While a constant factor improvement in performance could be critical in certain applications, the focus of this work is on the effect of dimensionality on certified robustness. We note that, in our analysis, we focus on i.i.d.\! and symmetric smoothing distributions. Our analysis highlights the importance of developing input-dependent smoothing techniques rather than the current smoothing methods based on i.i.d. distributions.

\section*{Software and Data}
The code for our experiments is available on GitHub at:

\url{https://github.com/alevine0/smoothingGenGaussian}


\section*{Acknowledgements}
We would like to thank anonymous reviewers for their valuable comments and suggestions. This project was supported in part by NSF CAREER AWARD 1942230, HR 00111990077, HR001119S0026 and Simons Fellowship on ``Foundations of Deep Learning.''




\bibliography{references}

\bibliographystyle{icml2020_simple}

\appendix
\section{Proof for lemma~\ref{lem:exp_bnd}}

\begin{proof}
Applying the series expansion of $e^{tZ}$, we get,
\begin{align*}
    E[e^{tZ}] &= \sum_{n=0}^{\infty} \frac{t^n E[Z^n]}{n!}\\
    E[Z^n] &= \frac{1}{C} \int_{-\infty}^{\infty} z^n e^{- \left( |z|/b \right)^q} dz\\
    &= \frac{1}{C} \int_{0}^{\infty} (1 + (-1)^n) z^n e^{-z^q/b^q} dz\\
    &= \begin{cases}
    0, &\text{$n$ is odd}\\
    \frac{2}{C} \int_{0}^{\infty} z^n e^{-z^q/b^q} dz, &\text{$n$ is even}\\
    \end{cases}
\end{align*}
When $n$ is even:
\begin{align*}
    E[Z^n] &= \frac{2}{C} \int_{0}^{\infty} z^n e^{-z^q/b^q} dz\\
    &= \frac{2 b^{n+1} \Gamma \left( \frac{n+1}{q} \right)}{C q}
\end{align*}
Substituting $C$,
\begin{align*}
    E[Z^n] &= \frac{b^n \Gamma \left( \frac{n+1}{q} \right)}{\Gamma(1/q)} \leq b^n \Gamma(n+1) \tag*{for $q \geq 1$}\\
    E[Z^n] &\leq b^n n!
\end{align*}
Therefore, keeping only the terms with even $n$ in the expansion of $E[e^{tZ}]$, we get:
\begin{align*}
    E[e^{tZ}] &\leq \sum_{m=0}^{\infty} (t^2 b^2)^m\\
    &= \sum_{m=0}^{\infty} \left( \frac{t^2 \sigma^2 \Gamma(1/q)}{\Gamma(3/q)} \right)^m \tag*{using $\sigma^2 = \frac{b^2 \Gamma(3/p)}{\Gamma(1/p)}$}\\
    &\leq \sum_{m=0}^{\infty} (c^2 t^2 \sigma^2)^m
\end{align*}
for some positive constant $c < 1.85$, because,
\begin{align*}
    \frac{\Gamma(1/q)}{\Gamma(3/q)} &= \frac{3 q \Gamma(1+1/q)}{q \Gamma(1+3/q)} \tag{using $\Gamma(z+1) = z \Gamma(z)$}\\
            &= \frac{3\Gamma(1+1/q)}{\Gamma(1+3/q)}\\
            &< 1.85^2 \tag{for $q \geq 1$, $\Gamma(1+1/q) \leq 1$ and $\Gamma(1+3/q)>0.88$}
\end{align*}
\end{proof}

\section{Proof for lemma~\ref{lem:l1_intersect}}

\begin{proof}
The points in $V_1$ satisfy the following $2^d$ constraints:
\begin{align*}
    x_1 + x_2 + \cdots + x_d & \leq b\\
  - x_1 + x_2 + \cdots + x_d & \leq b\\
    x_1 - x_2 + \cdots + x_d & \leq b\\
  - x_1 - x_2 + \cdots + x_d & \leq b\\
                             &\vdots\\
  - x_1 - x_2 - \cdots - x_d & \leq b\\
\end{align*}
Similarly, points in $V_2$ satisfy,
\begin{align*}
    (x_1 - \epsilon) + x_2 + \cdots + x_d & \leq b\\
  - (x_1 - \epsilon) + x_2 + \cdots + x_d & \leq b\\
    (x_1 - \epsilon) - x_2 + \cdots + x_d & \leq b\\
  - (x_1 - \epsilon) - x_2 + \cdots + x_d & \leq b\\
                             &\vdots\\
  - (x_1 - \epsilon) - x_2 - \cdots - x_d & \leq b\\
\end{align*}
Then, the points in $V_1 \cap V_2$ must satisfy the following set of constraints constructed by picking constraints that have a $+$ sign for $x_1$ in the first set of constraints and a $-$ sign for $x_1$ in the second set.
\begin{align*}
    x_1 + x_2 + \cdots + x_d & \leq b\\
  - (x_1 - \epsilon) + x_2 + \cdots + x_d & \leq b\\
    x_1 - x_2 + \cdots + x_d & \leq b\\
  - (x_1 - \epsilon) - x_2 + \cdots + x_d & \leq b\\
                             &\vdots\\
  - (x_1 - \epsilon) - x_2 - \cdots - x_d & \leq b\\
\end{align*}
They may be rewritten as,
\begin{align*}
    (x_1 - \epsilon / 2) + x_2 + \cdots + x_d & \leq b - \epsilon / 2\\
  - (x_1 - \epsilon / 2) + x_2 + \cdots + x_d & \leq b - \epsilon / 2\\
    (x_1 - \epsilon / 2) - x_2 + \cdots + x_d & \leq b - \epsilon / 2\\
  - (x_1 - \epsilon / 2) - x_2 + \cdots + x_d & \leq b - \epsilon / 2\\
                             &\vdots\\
  - (x_1 - \epsilon / 2) - x_2 - \cdots - x_d & \leq b - \epsilon / 2\\
\end{align*}
which define an $\ell_1$ ball of radius $b - \epsilon / 2$ centered at $(\epsilon / 2, 0, \ldots, 0)$, that is, $\epsilon / 2$ in the first coordinate and zero everywhere else.
\end{proof}

\section{Additional Plots of Certificate Upper Bounds}
See Figure \ref{fig:supfig}.
\begin{figure*}
    \centering
    \includegraphics[width=\textwidth]{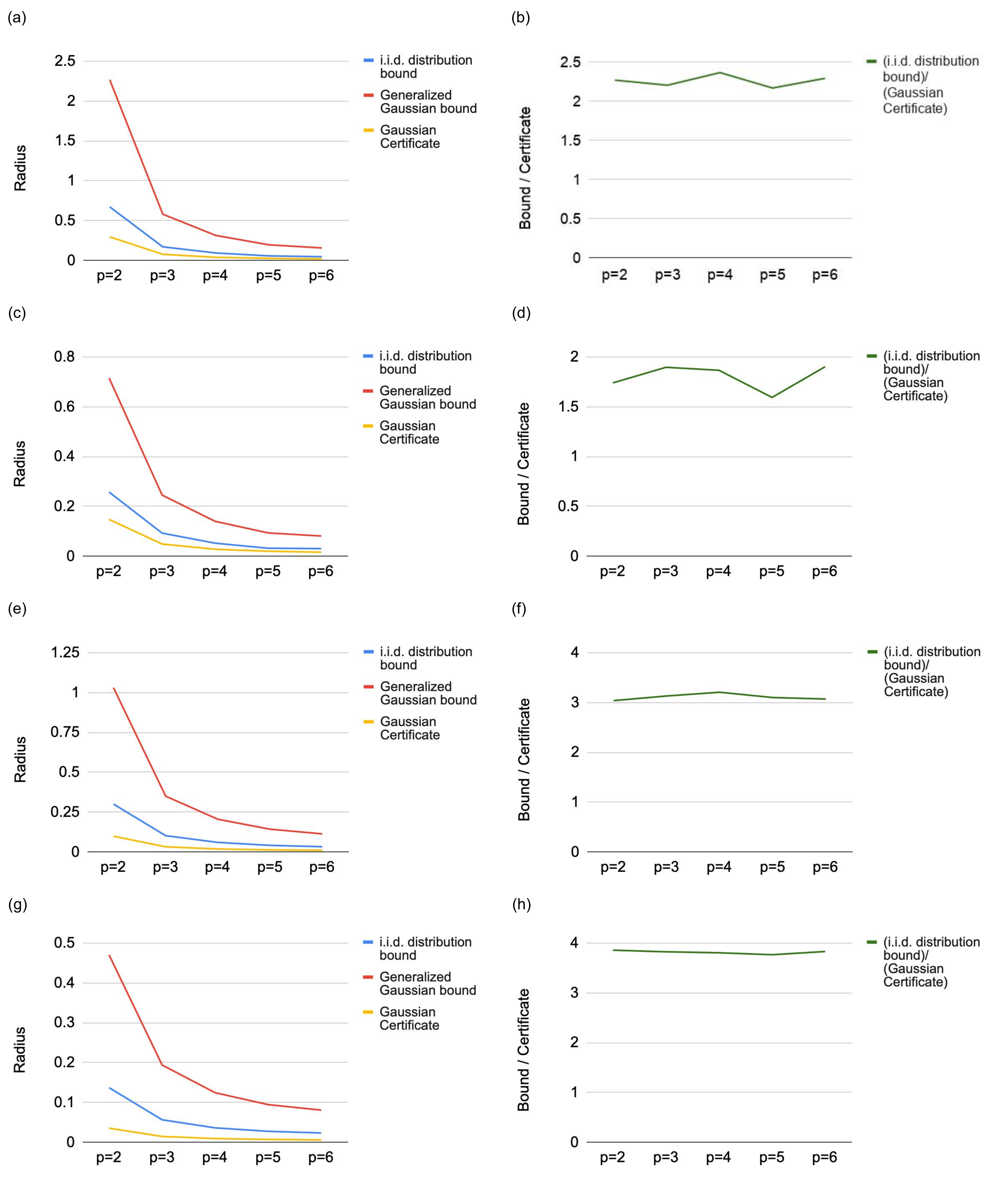}
    \caption{Upper bounds for certifying with Generalized Gaussian noise on CIFAR-10 images, with $q=p$, compared with certificates using Gaussian noise directly.Left panels show the certificates and the bounds directly, while right panels show the ratios between the i.i.d. distribution bounds (tighter in each case) and the certificates. Panels (a,b) use unaltered CIFAR-10 images with $\sigma = 0.5$ noise.  Panels (c,d) and (e,f) use CIFAR-10 images at $16 \times 16$ scale with $\sigma= 0.12$ and $\sigma=0.25$ respectively.  Panels (g,h) use CIFAR-10 images at $8 \times 8$ scale with $\sigma= 0.12$.  }
    \label{fig:supfig}
\end{figure*}
\section{Experimental Details}
Our experiments are adapted from the released code for $\ell_2$ smoothing from \cite{cohen19}. In particular, for each Generalized Gaussian distribution with varying parameter $q$ and standard deviation $\sigma$, we trained a ResNet-110 classifier on CIFAR-10 for 90 epochs, with the training under the same noise distribution as used for certification. All training and certification parameters are identical to those used in \cite{cohen19} unless otherwise specified. In particular, all certificates are reported to 99.9\% confidence, and we tested using a 500-image subset of the CIFAR-10 test set. For lower-resolution versions of CIFAR-10, we again trained separate models for each  resolution used, with  the resolution at training time matching  the resolution at test time. We first reduced the image resolutions before adding noise, then, once the noise was added, scaled the images back to the original $32  \times 32$ resolution (by repeating pixel values) before classifying with ResNet-110: this ensured that the number of parameters did not vary between classifiers.

We trained with $\sigma = 0.12, 0.25, 0.50, 1.00$ for resolutions $32 \times 32$, $16 \times 16$ and $8 \times 8$.  At higher levels of noise for each scale ($\sigma = 0.25$ for $8 \times 8$, $\sigma = 0.5$ for $8 \times 8$ and $16 \times 16$, $\sigma = 1.00$ on all scales) the resulting classifiers could not correctly certify the median image ($\underline{p_1(x)} < .5$), so we do not report any certificates. 

Values for ImageNet for the median certificate under Gaussian noise are adapted from the released certificate data from \cite{cohen19}.


\end{document}